\def\eqref#1{equation~\ref{#1}}
\def\1{\bm{1}}
\DeclareMathAlphabet{\mathsfit}{\encodingdefault}{\sfdefault}{m}{sl}
\SetMathAlphabet{\mathsfit}{bold}{\encodingdefault}{\sfdefault}{bx}{n}
\definecolor{codegreen}{rgb}{0,0.6,0}
\definecolor{codegray}{rgb}{0.5,0.5,0.5}
\definecolor{codepurple}{rgb}{0.58,0,0.82}
\definecolor{backcolour}{rgb}{0.95,0.95,0.92}
\lstdefinestyle{mystyle}{
    backgroundcolor=\color{backcolour},   
    commentstyle=\color{codegreen},
    keywordstyle=\color{magenta},
    numberstyle=\tiny\color{codegray},
    stringstyle=\color{codepurple},
    basicstyle=\ttfamily\footnotesize,
    breakatwhitespace=false,         
    breaklines=true,                 
    captionpos=b,                    
    keepspaces=true,                 
    numbers=left,                    
    numbersep=5pt,                  
    showspaces=false,                
    showstringspaces=false,
    showtabs=false,                  
    tabsize=2
}
\newtheorem{theorem}{Theorem}
\title{Learning Image Labels On-the-fly \\for Training Robust Classification Models}
\author{Xiaosong Wang, Ziyue Xu, Dong Yang, Leo Tam, Holger Roth, Daguang Xu \\
Nvidia Corporation, USA \\
\texttt{\{xiaosongw,ziyuex,dongy,leot,hroth,daguangx\}@nvidia.com} 
}
\begin{document}

\maketitle

\begin{abstract}
Current deep learning paradigms largely benefit from the tremendous amount of annotated data. However, the quality of the annotations often varies among labelers. Multi-observer studies have been conducted to study these annotation variances (by labeling the same data for multiple times) and its effects on critical applications like medical image analysis. This process indeed adds extra burden to the already tedious annotation work that usually requires professional training and expertise in the specific domains. On the other hand, automated annotation methods based on NLP algorithms have recently shown promise as a reasonable alternative, relying on the existing diagnostic reports of those images that are widely available in the clinical system. Compared to human labelers, different algorithms provide labels with varying qualities that are even noisier. In this paper, we show how noisy annotations (e.g., from different algorithm-based labelers) can be utilized together and mutually benefit the learning of classification tasks. Specifically, the concept of attention-on-label is introduced to sample better label sets on-the-fly as the training data. A meta-training based label-sampling module is designed to attend the labels that benefit the model learning the most through additional back-propagation processes. We apply the attention-on-label scheme on the classification task of a synthetic noisy CIFAR-10 dataset to prove the concept, and then demonstrate superior results (3-5\% increase on average in multiple disease classification AUCs) on the chest x-ray images from a hospital-scale dataset (MIMIC-CXR) and hand-labeled dataset (OpenI) in comparison to regular training paradigms.
\end{abstract}

\section{Introduction}
Supervised deep learning methods, although proven to be effective on many tasks, rely heavily on the quality of the data and its corresponding annotations. Some tasks enjoy almost error-free annotation, such as handwritten numbers and simple natural images. However, for other applications including most medical image analysis tasks, the inherent ambiguity of the task leads to unavoidable noise and fuzziness within the annotations themselves, no matter how experienced the expert labelers are. Meanwhile, under a multi-labeler setting for quality control purpose, the significant intra- and inter-observer variability injects even more uncertainties into the resulting labels. Beyond the above challenges, for the specific task of chest X-ray image classification, due to the fact that most labels of the available large-scale open datasets are automatically mined by Natural Language Processing (NLP) algorithms, there will be yet another layer of error-prone operation on top of existing variability. Ideally, we would prefer multiple manually and reliably labelled close-to-truth annotations, while in reality, most of the data only have a single annotation from an algorithm with relatively low accuracy. 

Learning to learn from a variety of data (labels) falls within the scope of meta-learning, which is popular in many machine learning application, e.g., domain adaptation/generalization \citep{finn2017maml,dou2019domain} and few-shot learning \citep{snell2017prototypical,liu2019few}. Those previous meta-learner models (as illustrated in Fig.~\ref{fig:learning}(b)) often focus on learning the distribution of data (inputs of tasks) and specifying the update strategy of learner model parameters. Indeed, data from different sets (distributions) will contribute to the final learner model. On the contrary, we do not want the model to learn from erroneous labels (from less-experienced labelers) but learn only from ``true'' labels. We utilize the learner model parameters (via a meta-training process) to sample ``true'' labels for training a single learner model (as shown in Fig.~\ref{fig:learning}(c)). 

To address these challenges,
we proposed an attention-on-label strategy to benefit the training from multiple labels on the same subject. Instead of the resource-demanding process of asking several human annotators to label the same data, we choose to utilize annotations from different algorithm-based labelers, which only add little overhead beyond the single-labeler scenario. A meta-training scheme is adopted and integrated in our proposed label-sampling module to attend the labels that benefit the model learning the most through additional back-propagation processes. 

Our contributions in this work are three-fold: 1) We proposed a training framework to compute the image labels on-the-fly in the classification tasks. A meta-training based approach is introduced to attend and sample the labels from multiple annotators; 
2) Gradient flows towards the label are investigated and implemented. Indeed, the multiple sets of labels are inputs to the training framework. Learnable operations of the labels will require additional updates of label-related model parameters;
3) We not only prove the concept on CIFAR-10 but also perform experiments on two chest X-ray datasets with both image-only and image-text classification tasks. In all datasets, superior performance of the proposed method is demonstrated in the image classification tasks compared to baseline methods. 

\begin{figure}[t]
    \centering
	\includegraphics[width=0.7\columnwidth]{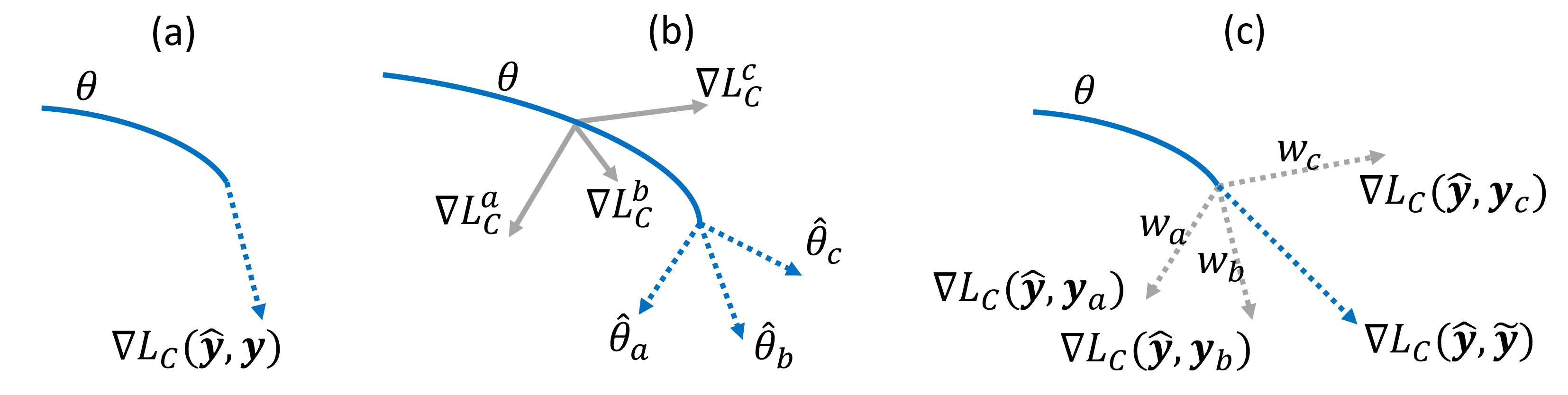}
	\caption{The diagram shows the difference of three learning paradigms in terms of how gradients are utilized for the training, i.e., (a) regular gradient based learning; (b) Meta-learning with multiple learning targets; (c) Proposed: learning the weights of each label ($\mathbf{y}_a,\mathbf{y}_b,\mathbf{y}_c$) in meta-training and then computing the weighted summation ($\tilde{\mathbf{y}}$) of labels for computing the final loss with prediction $\hat{\mathbf{y}}$.
	}
	\label{fig:learning}
\end{figure}

\section{Related Works}
\textbf{Meta learning: }Meta learning aims to learn a generalizable model by situating itself at a higher level than conventional learning. This can be achieved in several ways such as finding weights that can be easily adapted to other models~\citep{finn2017maml} or domains~\citep{li2018domaingen} during the training process. Meta learning results in models that can converge quickly with a few examples~\citep{ravi2017OptimizationAA}. They all share a similar meta-training process while the various goals of meta-training can divide them to different routes as examples shown in Fig.~\ref{fig:learning}. In this work, we target at weighting the importance of each label set based on its meta-training feedback and learning from the most effective labels.  

\textbf{Learning from noisy labels:} Learning from noisy labels \citep{natarajan2013learning,li2020dividemix,zhang2020distilling} has been a popular topic in deep learning due to its prevalence in many existing datasets with intra- and inter-observer variability, and the inherent uncertainties of both data and task themselves. For medical imaging applications with a high degree of ambiguity, this issue is even more significant. Recent works attempt to address this challenge via a consistency loss with a teacher model ~\citep{li2019learning}, loss weighting with 2nd order derivatives~\citep{zhang2020distilling}, and for medical image specifically, an online uncertainty sample mining strategy~\citep{xue2019ISBI}. Please note that they all focus on noise labels from a single annotator, while we attempt to design an attention mechanism to utilize labels from different sources together. 

\textbf{Multi-label classification in chest X-ray:} Because of its wide application and easy accessibility, chest X-ray is one of the major research areas in the field of medical image analysis. 
Among the pioneering works \citep{wang2017chestx,rajpurkar2017chexnet,yao2017learning, li2018thoracic,tang2018attention,hwang2019development,tang2020automated} in this area of deep learning, 
TieNet~\citep{wang2018tienet} first introduces an end-to-end trainable CNN-RNN architecture to extract distinctive text representations in addition to image features for improving label quality. More recently, a graph model was incorporated to enhance the learning accuracy~\citep{zhang2020radiology}. 

\textbf{Multi-observer studies}:
To ensure the annotation quality, especially for medical images where high expertise is required, it is common to have multiple set of labels on the same set of data \citep{rasch1999definition,cherian2005standardized}. In a sense, each annotation can be regarded as an estimation with uncertainty, good or bad, for the underlying ``true label''. 
Thus, algorithms taking this uncertainty factor into consideration are needed in order to make better use of such multi-observer data. \citet{kohl2018probabilistic} attempt to learn a distribution from a set of diverse but plausible segmentation from multiple graders. A recent work~\citep{tanno2019learning} proposed to model annotators by a confusion matrix which is jointly estimated during classification. 
In our work, instead of human annotators with different skill-levels, we employed several ``algorithmic labelers'' to generate multiple annotations from the same raw data. Comparing with their human counterparts, there exists less limitations and cost to increase the number of labelers, while the resulting labels can be more noisy. Hence, we choose a different strategy of attention model and meta-learning to benefit the model learning process.

\section{Learning from Data with Multiple Noisy Annotations}
The accuracy of NLP algorithm-based labelers has been studied \citep{peng2018negbio,irvin2019chexpert} and verified by a small set of hand-labeled data (based on associated report texts). The noises in annotations could be traced from many sources, e.g., algorithmic errors, incomplete information in reports, and misjudgement from the clinicians. All of these could elevate the uncertainty and impair the reliability of the publish data and associated ground-truth labels, in terms of their utilization in modern machine learning paradigms. 
``Who to believe?'' becomes a fundamental question to answer, which will ultimately have a significant impact on the performance of trained model.


MIMIC-CXR dataset \citep{johnson2019mimic} provides the labels sets from two independent algorithm based annotators with positive, negative, and uncertain cases. The availability of all these different sourced labels enables the observation of the uncertainty inherent to some data sample (with different values in multiple label sets). As shown in Table \ref{tab:uncertainty}, there are many uncertain cases in each kind of finding while the uncertainty / positive ratio may vary in diseases (ranged from $10\%$ to $110\%$).  

\begin{table}[t]
	\centering
	\resizebox{0.8\textwidth}{!}{
	\begin{tabular}{l|c|c|c|c|c|c}
		\hline
		\textbf{label sets} & atelectasis & cardiomegaly & consolidation & edema & pneumonia & pneumothorax \\
		\hline\hline
		negbio\_u           & 10986       & 11899        & 3348          & 13204 & 19029     & 1112        \\
		negbio\_p           & 47804       & 40509        & 11088         & 27911 & 16122     & 9885       \\
		\hline
		u/p ratio           & 0.229       & 0.293        & 0.301         & 0.473 & 1.18      & 0.112       \\
		\hline\hline
		chexpert\_u         & 10662       & 6235         & 4446          & 13817 & 18915     & 1177        \\
		chexpert\_p         & 47629       & 46373        & 11231         & 28339 & 16757     & 11046      \\
		\hline
		u/p ratio           & 0.223       & 0.134        & 0.395         & 0.487 & 1.128     & 0.106      \\
		\hline
	\end{tabular}}
	\caption{Uncertainties (\_u) and positives (\_p) of 6 sample disease findings from two labelers. }
	\label{tab:uncertainty}
\end{table}

Here, we try to tackle this problem by training a single multi-label classification model while considering all the available label sets. A novel attention-on-label training process is introduced. For each set of labels, we perform individual back-propagation as a form of meta-training and then compute the new image/image+text feature using the individual updated model. Based on the new features, we attended to the label set with more representative features (as a weight) and sample the weighted summary of labels for the final update of the model in each iteration. Fig.~\ref{fig:overview} illustrates the overall architecture and learning processes for both image and label model parameters. 


\begin{figure}[t]
	\centering
	\includegraphics[width=0.9\columnwidth]{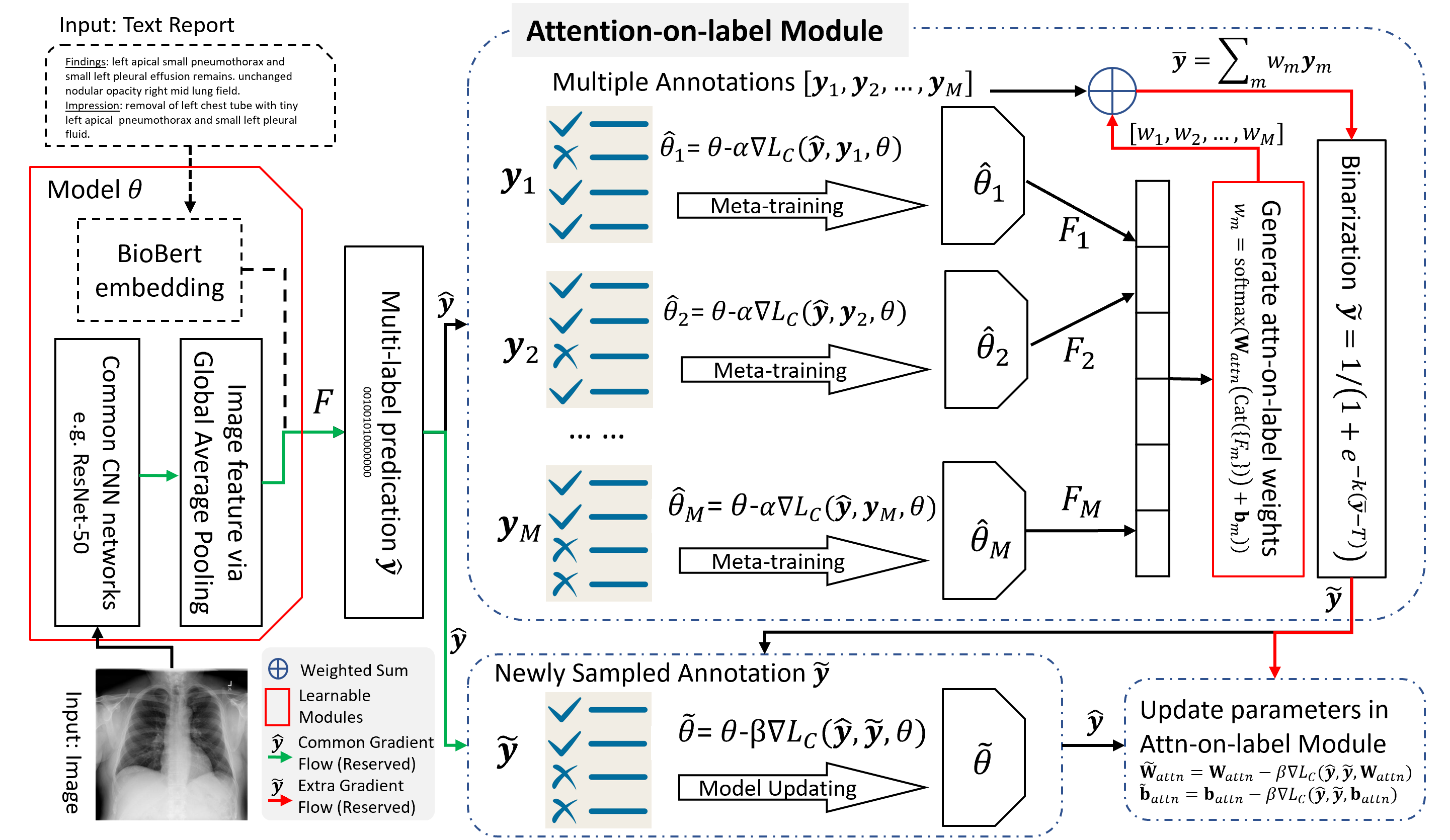}
	\caption{Overview of the proposed attention-on-label learning framework. 
	}
	\label{fig:overview}
\end{figure}

\subsection{Multi-label Classification Baseline}

We represent each image with $x$ and the labels of classes for each image as a binary label vector $\mathbf{y} = [y_{1},...,y_{n},...,y_{N}], y_{n}\in\{0,1\}$. 
$y_{n}=1$ indicates the presence of corresponding disease pattern or other findings in the image. In the multi-label disease classification setting, the presence of each finding is predicted separately by producing a likelihood after applying the sigmoid on each logit. For the experiments on CIFAR (as multi-class classification), we also use the one-hot binary vectors to represent the labels and predictions. Therefore, the proposed framework could be utilized for both multi-label and multi-class classification tasks.

Our proposed attention-on-label scheme could be applied to a large variety of pre-trained CNN architectures. Without loss of generality, we take the common ResNet-50 (from Conv1 to Res5c) as our backbone network. A global averaging pooling (GAP) layer was applied to transform the activation from convolutional layers into a one dimension image feature $F$. The reason for applying a GAP layer is the necessity to pass concatenated image and text features to a fully-connected layer for the final classification.   

We adopt the most common loss functions for the multi-label classification prediction $\hat{\mathbf{y}}$, i.e., binary cross entropy (BCE) loss:  
$\mathcal{L}_{C} (\hat{\mathbf{y}},\mathbf{y})=-1/N
\sum_{i=1}^N y_i \log \hat{y}_i+(1-y_i)\log(1-\hat{y}_i)$. 
Other more advanced losses could also be employed, while this type of improvement is out-of-scope in this paper. Here, we would like to demonstrate the feasibility and benefit of applying our proposed meta training process with attention-on-label over a vanilla model. Other critical issues, like the unbalanced numbers of pathology comparing with ``normal'' classes, are not considered here either to keep the evaluation simple and effective. 

\subsection{Attention on Labels}
The overall training procedure is illustrated in Algorithm \ref{alg:meta}. For each training iteration, we input each data entry $(x, Y)$ from the training set and $Y = \{\mathbf{y}_{1}, \dots, \mathbf{y}_{m}, \dots, \mathbf{y}_{M}\}$ are $M$ sets of image labels. 
During the meta-training, we compute the BCE loss ($\mathcal{L}_C(\hat{\mathbf{y}},\mathbf{y}_{m},\theta)$) between the prediction $\hat{\mathbf{y}}$ of current classification model $\theta$ and label set $\mathbf{y}_m$ and then perform the back-propagation to compute a new set of model parameters ($\hat{\theta}_m$),
\begin{equation}
\label{eqn:gd}
\hat{\theta}_m = {\theta}-\alpha \nabla_{\theta} \mathcal{L}_C (\hat{\mathbf{y}},\mathbf{y}_{m},{\theta}),
\end{equation}
\noindent $\alpha$ is the learning rate for this meta-training process.
We then compute a set of new features $\{F_{m}, m\in\{1,...,M\}\}$ via the inference of image $x$ using each meta-model $\hat{\theta}_m$ individually. $\{F_{m}\}$ could be either image features (i.e., output of the GAP) or one concatenated with text embedding (detailed in Section \ref{sec:image-text}). $\{F_{m}\}$ represent the feedback of model updates with each label set $\mathbf{y}_m$, i.e., the change that each $\mathbf{y}_m$ has brought to the model $\theta$. Other types of feedback from each noisy label could also be utilized here, e.g., the gradients $\{\nabla_{\theta} \mathcal{L}_C(\hat{\mathbf{y}},\mathbf{y}_{m},{\theta}), m\in\{1,...,M\}\}$. Here, we take $\{F_{m}\}$ as an example to compute the weight $w_{m}$ for each label set via a softmax based attention mechanism, 
\begin{equation}
\label{eqn:attn}
w_{m} = \mathrm{Softmax}(\mathbf{W}_{attn}(\mathrm{Cat}(\{F_{m}\})+\mathbf{b}_{attn}),
\end{equation}
\noindent where $\mathbf{W}_{attn}$ and $\mathbf{b}_{attn}$ are learnable parameters in our attention-on-label module. $\mathrm{Softmax}$ is the activation function. $\mathrm{Cat}$ represents the concatenation as a stack of all features. $w_{m}$ indicates the importance/correctness of label set $\mathbf{y}_m$ and is applied to compute the weighted average of all label sets for each data sample. Here, we employ a common softmax-based attention mechanism \citep{xu2015show,chen2017sca} as a sample case and many other more complex learning-based attention mechanisms can be adopted to compute the weights, e.g., self-attention \citep{vaswani2017attention}.

The values of label vectors after weighted average $\bar{\mathbf{y}}=\sum_m w_m \mathbf{y}_m$ are in the range of $[0,1]$, which is rather ambiguous for the model to learn. Binarization will be useful to cast the value close to either $0$ or $1$, but it is not differentiable and will disrupt the gradient flow. Therefore, we adopt the differentiable binarization function as first introduced in \citet{liao2019real},  
\begin{equation}
    \tilde{\mathbf{y}}=\frac{1}{1+e^{-k (\bar{\mathbf{y}} - T)}},
    \label{eq:db}
\end{equation}
\noindent where $k$ sets the sharpness of a $0$ to $1$ cliff. 
$T$ is a threshold to slightly adjust the value range.  
Finally, we update the model once more with the attended label and a globla learning rate $\beta$ for this iteration,
\begin{equation}
    \tilde{\theta} \leftarrow {\theta}-\beta\nabla \mathcal{L}_C (\hat{\mathbf{y}},\tilde{\mathbf{y}},{\theta}).
    \label{eq:final}
\end{equation}

\textbf{Gradient Flows Towards Labels:}
Extra gradient flows (highlighted in red in Fig.\ref{fig:overview}) are required for training our attention-on-label mechanism, specifically the parameters in Eq. \ref{eqn:attn}. Most of the current learning frameworks have gradient flows (highlighted in green in Fig.\ref{fig:overview}) with the images in the end since the labels are usually fixed or smoothed in advance \citep{muller2019does}. However, the gradients in our proposed framework not only flow to the images but also go through towards the labels since the final labels $\tilde{\mathbf{y}}$ are computed on-the-fly with learned weights/attentions. To our best knowledge, this concept of gradients towards labels is novel and has not been investigated and implemented before. Indeed, the inputs to the attention-on-label module are $M$ sets of labels and the computed features $\{F_{m}\}$, which are detached (without auto-computed gradients) and stacked during the meta-training. Therefore, additional parameter updating is required at the end of each iteration,
\begin{equation}
    \tilde{\mathbf{W}}_{attn} \leftarrow \mathbf{W}_{attn}-\beta\nabla \mathcal{L}_C (\hat{\mathbf{y}},\tilde{\mathbf{y}},\mathbf{W}_{attn}), \; \tilde{\mathbf{b}}_{attn} \leftarrow \mathbf{b}_{attn}-\beta\nabla \mathcal{L}_C (\hat{\mathbf{y}},\tilde{\mathbf{y}},\mathbf{b}_{attn}).
    \label{eq:final-label}
\end{equation}

\subsection{Image-text Embedding}
\label{sec:image-text}
Clinical textual material, e.g., clinical notes \citep{pelka2019branding} and radiology report \citep{wang2018tienet}, contains richer information. We include the text report as an input to the classification problem to see if our proposed learning process will still benefit the learning and further improve the classification accuracy. There are a variety of approaches to generate text embedding, e.g., Fisher vectors of word2vec \citep{klein2015associating}, bidirectional LSTMs \citep{wang2016image}, and the most recently developed BERT model \citep{devlin2018bert}. To keep the simplicity of our baseline model, we embed the text report to a 768 dimension real-valued vector using the uncased version of BioBert features \citep{lee2020biobert}, followed by two fully connected layers with 512 neurons each. 

\begin{algorithm}[t]
    \small
	\caption{Meta-training with the attention-on-label module}
	\label{alg:meta}
	\begin{algorithmic}[1]
		\State Randomly initialize ${\theta}$
		\For {each data entry $(x,Y)$}                	
    		\For {$m \in \{1:M\}$}
        		\State{Compute updated parameters with gradients: $\hat{\theta}_m = {\theta}-\alpha \nabla_{\theta} \mathcal{L}_c (\hat{\mathbf{y}},\mathbf{y}_m,{\theta})$  }   
        		\State{Compute new features $F_{m}$ using the newly updated $\hat{\theta}_m$}
    		\EndFor
    		\State Stack and concatenate the features: $\mathrm{Concat}(\{F_{m}\})$
    		\State Compute softmax attentions to generate the weight $w_{m}$ for each feature $F_{m}$ (Eq. \ref{eqn:attn})
    		\State Sample the new label $\bar{\mathbf{y}}$ for each data sample using the weight $w_{m}$
    		\State Perform differentiable binarization for each new label in $\bar{\mathbf{y}} \rightarrow \tilde{\mathbf{y}}$
    		\State Update the final image model $\tilde{\theta} \leftarrow {\theta}-\beta\nabla \mathcal{L}_C (\hat{\mathbf{y}},\tilde{\mathbf{y}},{\theta})$
    		\State Manually compute and back-propagate the gradients for the attention-on-label model ($\mathbf{W}_{attn},\mathbf{b}_{attn}$)
		\EndFor        
	\end{algorithmic}
\end{algorithm}


\section{Datasets}

\textbf{CIFAR-10}: We simulate 5 different types  of annotators (with different experience) in a similar manner as \citet{tanno2019learning} by injecting label noises into the training set of \textbf{CIFAR-10}, namely 1) hammer-spammer (HS), 2) structured-flips (SF), 3) ordered-confusion (OC), 4) Adversarial (AD), and 5) average (AVG) of 1-4. Each set of noisy labels are generated based on the defined confusion matrices for each type (as shown in Fig.~\ref{fig:noise_cm_main}). Whether each sample would have a noisy label is randomly selected while the overall noisy distribution should correspond to each confusion matrix, individually. Within all the noisy training data, we randomly select $20\%$ as the validation set.

\begin{figure}[t]
    \centering
	\includegraphics[width=0.9\columnwidth]{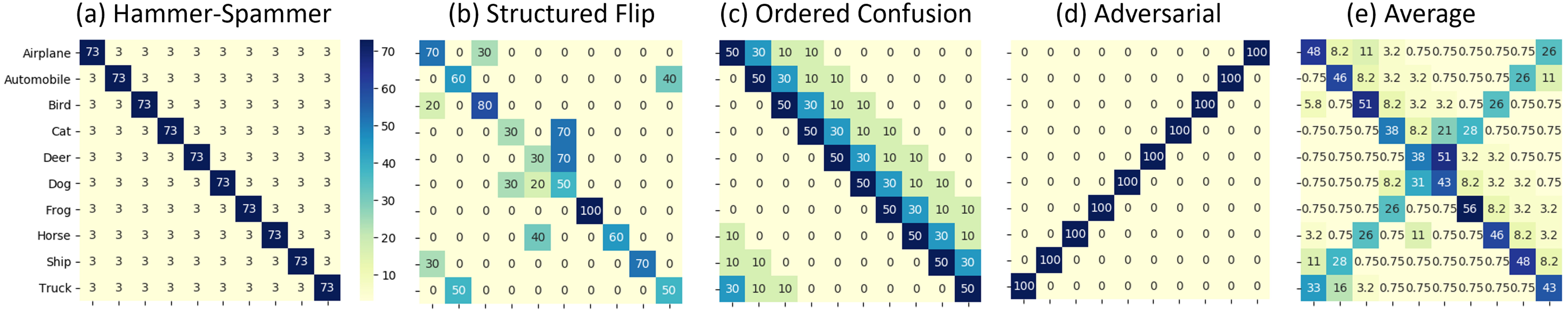}
	\caption{Confusion matrices of noisy labels from 5 different types of simulated annotators.
	}
	\label{fig:noise_cm_main}
\end{figure}

\textbf{MIMIC-CXR}: The MIMIC Chest X-ray \citep{johnson2019mimic} Database is a large publicly available dataset of chest radiographs with labels mined from image-associated text radiology reports using two different NLP based annotation tools, i.e., negbio \citep{peng2018negbio} and chexpert \citep{irvin2019chexpert}. The uncertain findings are marked as -1 in the original datasheet. Here, uncertainties are set to either 0 or 1 to form 4 different label sets, i.e., negbio\_u0, negbio\_u1, chexpert\_u0, and chexpert\_u1. The dataset contains 377,110 radiographs and labels from the 227,827 free-text radiology reports. Totally, 14 disease findings are listed ($N=14$). In our experiments, only the frontal view images are adopted, the number of which is equal to the number of reports.  We utilize the official data split for training, validation and testing. 

\textbf{MIMIC-CXR 1K hand-labeled Test}: Following the same labeling protocol proposed by \citet{demner2012design}, we randomly selected 1000 images and associated textual reports from the testing set of \textbf{MIMIC-CXR} and one of our staff (trained by a board-certified radiologist) hand-labeled the 1000 images by assigning the 14 labels manually to each image based on the reports, which will be released publicly. We believe it will also benefit the research in chest x-ray disease classification.

\textbf{OpenI}: OpenI \citep{demner2015preparing} is a public dataset of chest X-rays collected from multiple institutes by Indiana University.
In total, we fetch 3,851 unique radiology reports and 7,784 associated frontal/lateral images. To keep the consistency with \textbf{MIMIC-CXR} dataset, we use the same 14 categories of findings as mentioned above in the experiments. In our experiments, only 3,643 unique front view images and corresponding reports are evaluated.

\section{Experiments:}
The following methods in addition to the proposed method (\textbf{Ours}) are included in the comparison:

ResNet-50 (\textbf{R50}, \citealp{he2016deep}): We take the network based on ResNet-50 as a baseline. It adopts an ImageNet pre-trained ResNet-50 (from Conv1 to Res5c) as the backbone, followed by a GAP layer and a fully-connect layer for the final classification. Optionally, BioBert embedded text features will be concatenated with the output of GAP before the classification.     

\textbf{CM} \citep{tanno2019learning}: This method multiplies a confusion matrix with the probability that the model produces for each class. The basic assumption is that this confusion matrix can correct the missed labeled data and return the probability for the truth using the learned confusion matrices. We carefully implement it according to the code snapshot provided by the authors. 

\textbf{NG} \citep{zhang2020radiology}: This method utilizes the prior knowledge of the disease relations as a form of knowledge graph. By injecting such prior knowledge and employing a graph convolutional network, it learns the underlying info for the final classification and report generation task. It is worthy to note that the results we report in the experiments are produced by a model that is both trained and evaluated on the \textbf{OpenI} dataset. 

\textbf{TieNet} \citep{wang2018tienet}: It focuses more on how to learn the image and text embedding together using a CNN+RNN framework. 
Its LSTM based text embedding is relatively more complicated but also more representative through learning. We only adopt the text embedding from a pre-trained BioBERT model for our comparison, which is less customized.


\textbf{Evaluation Metric}: Receiver Operating Characteristic (ROC) curve is the standard metric to evaluate the performance of multi-label classification tasks. Here, Area Under the Curve (AUC) values are computed for all the experiments on \textbf{MIMIC-CXR} and \textbf{OpenI}. 
We compute the multi-class classification accuracy (using \textit{sklearn.metrics.accuracy\_score}) for all the experiments on \textbf{CIFAR-10}.

\subsection{Implementation Details:}
For pre-processing, we resize the image to
256$\times$256 (while keeping the size of 32$\times$32 for \textbf{CIFAR-10}) and normalize the image intensities to [0, 1]. No data augmentation is employed in our experiments. As mentioned above, we set the learning rate for the meta-training phase as $\alpha = 0.2$ and global learning rate as $\beta = 1e-4$. The best model for all hyper-parameters are determined via validation. We set $k=50$ and $T=0.5$ empirically in the differentiable binarization module. We use a single NVIDIA Titan-X Pascal for training each classification model with a uniform batch size $B = 32$ across all the experiments. Adam optimizer is utilized for training all the compared models.

\begin{table}[t]
	\small
	\centering
	\begin{tabular}{ l | c | c | c | c || c | c | c }
	    \hline
		 CIFAR-10 & HS & SF & OC & AD & AVG & \textbf{CM} & \textbf{Ours} \\
		\hline\hline
		 Noise-Level & 30\% & 40\% & 50\% & 100\% & 45\% & - & -  \\
		 Accuracy & 0.808 & 0.438 & 0.555 & 0.025 & 0.533 & \underline{0.593} & \textbf{0.677}\\
		\hline          
	\end{tabular}
	\caption{Averaged classification accuracy (with 5 noisy label sets) on \textbf{CIFAR-10} testing set (clean).}
	\label{tab:acc-cifar}
\end{table}

\subsection{Classification Results on CIFAR-10}
To prove the concept, we employ \textbf{CIFAR-10} with 5 types of added noises in labels to illustrate that the proposed attention-on-label scheme can be beneficial for the model training using multiple noise label sets. Table \ref{tab:acc-cifar} illustrates the multi-class classification accuracy on the \textbf{CIFAR-10} testing set. Noise-levels are computed in (1-Accuracy) using corresponding confusion matrices. In general, better quality labels and data lead to better trained models. Noise introduced by structured flips confuse the model training more than other types. Here, HS represents a more experienced set of annotators and models trained with it obtained a high accuracy. AVG represents the results of learning from a label set with simple average of the noise-level (defined by the confusion matrices) of all annotators. Our proposed method achieves over $12\%$ and $8\%$ performance improvements over AVG and the previous state-of-the-art \textbf{CM}. 

In addition, we also investigate how the label noise level and number of annotators will affect the model performance (detailed in Sec. \ref{sec:extra-cifar}). With noise level ranged from $10\%$ to $80\%$, \textbf{Ours} can constantly achieve better or similar results as the models trained with the relatively more accurate labels (e.g., AVG and HS). While experimenting with different number (2-5) of annotators, we observe a surge of the classification accuracy after including more than 2 annotators' labels. \textbf{Ours} often out-performs \textbf{CM} while the number of annotators increase, especially when labels with high noise levels are employed.

\subsection{Classification Results on Chest X-Ray Images}

\textbf{Classification Results Using Chest X-Ray Images}: Table \ref{tab:auc-image-text-short} shows the evaluation results for all the compared method using only the images as the input to the model. The left side of Table \ref{tab:auc-image-text-short} shows the AUCs of all the finding categories from \textbf{R50}, \textbf{CM} and \textbf{Ours}. The averaged AUC for \textbf{Ours} actually drops from the baseline. Considering the fact that the testing set of \textbf{MIMIC-CXR} dataset is also using the algorithm based labels, our predictions actually diverge from those noisy labels and lean to the underlying true labels. It is proved by the results illustrated in \textbf{OpenI} dataset section (right part of Table \ref{tab:auc-image-text-short}). \textbf{OpenI} dataset has the hand-labeled ground truth and our method is able to achieve over 4\% increase in the averaged AUC, which is also greater than what the \textbf{CM} method achieves. 
Although, both \textbf{NG} and \textbf{TieNet} partially utilized the report textual information in their image classification framework, \textbf{Ours} still is able to obtain equivalent or better results in most of the detailed disease categories.
As mentioned above, those disease categories with larger amount of uncertainties provide more information and therefore benefit more from the proposed meta-training process, e.g., Atelectasis and Consolidation. Please see the Appendix for more detailed results.

\begin{table}[t]
	\centering
	\resizebox{0.8\textwidth}{!}{
	\begin{tabular}{|l|c|c|c||c|c|c|c|c|}
	    \hline
		\textbf{AUC} & \multicolumn{3}{c||}{MIMIC-CXR Test (NLP)} & \multicolumn{5}{c|}{OpenI (hand-labeled)}                               \\ \hline
		Disease & \textbf{R50} & \textbf{CM} & \textbf{Ours} & \textbf{NG} & \textbf{TieNet} & \textbf{R50} & \textbf{CM} & \textbf{Ours} \\ \hline
		\multicolumn{9}{|c|}{Image only} \\
		\hline
		Atelectasis         & 0.821         & 0.832               & 0.826     & 0.833       & 0.774        & 0.781    & 0.81            & 0.826         \\ 
		Cardio.        & 0.825         & 0.852               & 0.879     & 0.913       & 0.847        & 0.859    & 0.881           & 0.879         \\ 
		Consol.       & 0.762         & 0.751               & 0.906     & -           & -            & 0.829    & 0.842           & 0.906         \\ 
		Edema               & 0.887         & 0.903               & 0.885     & 0.931       & 0.879        & 0.895    & 0.924           & 0.885         \\ 
		E-cardio     & 0.74          & 0.757               & 0.725     & -           & -            & 0.795    & 0.758           & 0.725         \\ 
		Fracture            & 0.722         & 0.771               & 0.632     & 0.671       & -            & 0.513    & 0.596           & 0.632         \\ 
		Lesion         & 0.765         & 0.744               & 0.643     & 0.643       & 0.658        & 0.585    & 0.58            & 0.643         \\ 
		Opacity        & 0.814         & 0.82                & 0.775     & 0.803       & -            & 0.742    & 0.738           & 0.775         \\ 
		No-finding          & 0.857         & 0.863               & 0.775     & -           & 0.747        & 0.754    & 0.739           & 0.775         \\ 
		Effusion    & 0.906         & 0.914               & 0.942     & 0.942       & 0.899        & 0.912    & 0.932           & 0.942         \\ 
		Pleural-o.       & 0.866         & 0.829               & 0.705     & -           & -            & 0.648    & 0.676           & 0.705         \\ 
		Pneumonia           & 0.809         & 0.809               & 0.871     & 0.863       & 0.731        & 0.781    & 0.823           & 0.871         \\ 
		Pneum-x        & 0.866         & 0.858               & 0.833     & 0.843       & 0.709        & 0.793    & 0.882           & 0.833         \\ 
		Devices     & 0.92          & 0.926               & 0.729     & 0.805       & -            & 0.628    & 0.655           & 0.729         \\ \hline
		Average             & 0.825         & 0.830               & 0.794     & -           & -            & {0.751}    & \underline{0.774}           & \textbf{0.794}         \\ \hline
		\multicolumn{9}{|c|}{Image \& Text} \\
        \hline
		Average             & 0.941         & 0.927               & 0.931     & - & -            & 0.809    & \underline{0.824}           & \textbf{0.835}         \\ \hline
	\end{tabular}}
	\caption{Classification AUCs for 14 findings in Chest X-Rays using the testing set of \textbf{MIMIC-CXR} (with NLP generated labels) and \textbf{OpenI} dataset (hand-labeled GT).  E-cardio: enlarged-cardiomediastinum; Pneum-x: pneumothorax. More details in Table \ref{tab:auc-image-text-full} of the Appendix.}
	\label{tab:auc-image-text-short}
\end{table}

\begin{table}[t]
\centering
\resizebox{0.8\textwidth}{!}{
\begin{tabular}{|l|c|c|c|c||c|c|c|}
\hline
\textbf{AUC} & \multicolumn{7}{c|}{MIMIC-CXR 1K hand-labeled Test} \\
\hline
 & negbio\_u1 & negbio\_u0 & chexpert\_u1 & chexpert\_u0 & \textbf{R50} & \textbf{CM} & \textbf{Ours} \\
\hline
\hline
Average &  \textbf{0.869} & 0.805 & \underline{0.868} & 0.809 & \underline{0.842}  & 0.837 & \textbf{0.868}  \\
\hline
\end{tabular}}
\caption{Averaged classification AUCs for 14 findings using the \textbf{MIMIC-CXR 1K hand-labeled Test} set. negbio\_u1, negbio\_u0, chexpert\_u1, and chexpert\_u0 are derived from the original NLP mined labels by setting the uncertainty to either 1 or 0. 
See all diseases' AUCs in the Appendix.}
\label{tab:mimic-1kgt-short}
\end{table}

\textbf{Classification Results Using Both Chest X-ray Images and Report Texts}: We observe similar improvements on the image-text classification task. Indeed, the text report contains more information about the disease diagnosis (maybe more than the image itself). We also observe the increase of the overall AUCs. In this case, our proposed meta-training with attention-on-label scheme also helps to boost the classification performance with a significant margin.  

As shown in Table \ref{tab:mimic-1kgt-short}, we further demonstrate the performance improvement from our proposed method on the \textbf{MIMIC-CXR 1K hand-labeled Test} data. Note that the absolute accuracy is higher than the ones reported on \textbf{OpenI} dataset. It may indicate the domain gap between \textbf{MIMIC-CXR} and \textbf{OpenI} datasets. Additionally, we evaluate the accuracy of all four NLP mined label sets. 
The choice of uncertainty makes the NLP labels have quite different accuracy (~6\% gap) on the \textbf{MIMIC-CXR 1K hand-labeled Test} set, 
while our proposed method actually achieve higher or similar accuracies in comparison to the NLP annotators while a model trained with NLP mined labels usually can not reach the same level of accuracy as the labels themselves, e.g., \textbf{R50} is trained with negbio\_u1. 

\section{Conclusion and Final Remarks}
In this paper, we introduced a novel learning framework (meta-training with the attention-on-label module) for handling data with multiple noisy label sets. The variability that the multiple label sets bring could in fact benefit the learning of a more accurate and robust model. We demonstrated the application of our proposed method on both multi-label and multi-class classification tasks and we believe it will be quite straight-forward to adapt it for other task, e.g., image segmentation. Indeed, the proposed method provides a viable way to handle the challenges of learning large-scale data with algorithm generated labels, where the label is a huge burden, especially for medical image analysis.  

The proposed label sampling module convert the hard labels ($0$ or $1$) to soft labels (a value between $0$ and $1$) and it also reduces the overfit towards erroneous labels, which is similar to the idea of label smoothing \citep{muller2019does}. Different to hard label smoothing, we assigned the new label on-the-fly (based on the network feedback) instead of instantly replacing 0 and 1 with $0+\epsilon$ and $1-\epsilon$. Such setting is extremely effective to handle the partial label errors in multi-label classification. Indeed, label smoothing is a form of loss-correction \citep{lukasik2020does} and we also prove the proposed label weighting is equivalent to directly performing the weighted average of the losses (generated from using different label sets) to form the final loss (see the proof in \ref{sec:prove}). 


\newpage
\bibliography{egbib}
\bibliographystyle{iclr2021_conference}

\appendix
\section{Appendix}
Due to the limited space in the main text, we want to discuss some important related issues here and meanwhile demonstrated additional experiments of the proposed attention-on-label module. It starts with a discussion about what the attention-on-label really does and follows by extensive experimental results on both CIFAR-10 and chest X-rays image datasets. 

\subsection{Attention-on-label Is A Form of Lost-correction}
\label{sec:prove}
Here, we try to demonstrate how the proposed attention-on-label module really works and discuss its connection to other prior arts. First, we present a theorem, indicating that attention weights applied on the labels can also be employed to re-weight the losses that are computed using label sets from different annotators. 

\begin{theorem}[Label-Sampling Formulation]
\label{thm:main_thm}
\textit{The loss computed with the model prediction $\hat{\mathbf{y}}$ and the sampled label $\tilde{\mathbf{y}}=\sum_{m=1}^{M}w_m \mathbf{y}_m$,
\begin{equation}
\mathcal{L}_{C} (\hat{\mathbf{y}},\tilde{\mathbf{y}})=\mathcal{L}_{C} (\hat{\mathbf{y}},\sum_{m=1}^M w_m \mathbf{y}_m)
\end{equation}
is equivalent to a weighted summation of losses computed with each set of label $\mathbf{y}_m$,
\begin{equation}
\mathcal{L}_{C} (\hat{\mathbf{y}},\tilde{\mathbf{y}})=\sum_{m=1}^M w_m \mathcal{L}_{C} (\hat{\mathbf{y}}, \mathbf{y}_m),
\end{equation}
where 
\begin{equation}
    \mathcal{L}_{C} (\hat{\mathbf{y}},\mathbf{y})=-\frac{1}{N}
\sum_{i=1}^N y_i \log \hat{y}_i+(1-y_i)\log(1-\hat{y}_i)
\end{equation} is the binary cross-entropy loss. 
$w_{m}$ is the attention weights as defined in the main text, $N$ represents the number of classes in the label, and $M$ is the total number of label sets (annotators).
}
\end{theorem}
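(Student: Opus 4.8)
The plan is to exploit two elementary facts: the binary cross-entropy $\mathcal{L}_C(\hat{\mathbf{y}},\mathbf{y})$ is \emph{affine} in its label argument $\mathbf{y}$ for a fixed prediction $\hat{\mathbf{y}}$, and the attention weights produced by $\mathrm{Softmax}$ form a convex combination, $w_m\ge 0$ with $\sum_{m=1}^M w_m = 1$. First I would rewrite the per-class summand so that its dependence on the label is manifestly linear plus a label-independent constant:
\begin{equation}
y_i \log \hat{y}_i + (1-y_i)\log(1-\hat{y}_i) = y_i\,\log\frac{\hat{y}_i}{1-\hat{y}_i} + \log(1-\hat{y}_i),
\end{equation}
so that $\mathcal{L}_C(\hat{\mathbf{y}},\mathbf{y}) = -\frac{1}{N}\sum_{i=1}^N\big(a_i y_i + b_i\big)$ with $a_i := \log\frac{\hat{y}_i}{1-\hat{y}_i}$ and $b_i := \log(1-\hat{y}_i)$ depending only on $\hat{\mathbf{y}}$. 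Here I write $y_{m,i}$ for the $i$-th component of the $m$-th label set $\mathbf{y}_m$.

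Next I would substitute the sampled label $\tilde{y}_i = \sum_{m=1}^M w_m y_{m,i}$ and interchange the two finite sums:
\begin{equation}
\mathcal{L}_C(\hat{\mathbf{y}},\tilde{\mathbf{y}}) = -\frac{1}{N}\sum_{i=1}^N\Big(a_i \sum_{m=1}^M w_m y_{m,i} + b_i\Big) = -\frac{1}{N}\sum_{m=1}^M w_m \sum_{i=1}^N a_i y_{m,i} \;-\; \frac{1}{N}\sum_{i=1}^N b_i.
\end{equation}
The only discrepancy with the target $\sum_m w_m \mathcal{L}_C(\hat{\mathbf{y}},\mathbf{y}_m)$ is the bias term $-\frac{1}{N}\sum_i b_i$, which in the target expression is spread over the $M$ summands. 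At this point I would invoke $\sum_m w_m = 1$ to write $-\frac{1}{N}\sum_i b_i = \sum_m w_m\big(-\frac{1}{N}\sum_i b_i\big)$, and then re-collect terms index by index, obtaining $\mathcal{L}_C(\hat{\mathbf{y}},\tilde{\mathbf{y}}) = \sum_{m=1}^M w_m\big(-\frac{1}{N}\sum_{i=1}^N(a_i y_{m,i}+b_i)\big) = \sum_{m=1}^M w_m \mathcal{L}_C(\hat{\mathbf{y}},\mathbf{y}_m)$, which is the claim.

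There is no deep obstacle here; the whole content is the observation that BCE is affine in the label and that $\mathrm{Softmax}$ outputs are normalized. The one step that genuinely deserves emphasis is that normalization: the identity fails for an arbitrary unnormalized weighting, because the constant $\log(1-\hat{y}_i)$ term would then no longer be reproduced, so I would state the hypothesis $\sum_m w_m = 1$ explicitly and flag exactly where it enters. (I would also note that the theorem as stated concerns the convex combination $\sum_m w_m\mathbf{y}_m$ itself, i.e.\ the label before the differentiable-binarization step of Eq.~\ref{eq:db}.) Finally, to connect back to the paper's loss-correction framing, I would add the short corollary that since the two sides coincide \emph{as functions of the model parameters} $\theta$ (with $\hat{\mathbf{y}}=\hat{\mathbf{y}}(\theta)$ and the detached quantities $w_m,\mathbf{y}_m$ held fixed, as in the meta-training step), their gradients agree; hence the update in Eq.~\ref{eq:final} is identical to a gradient step on the re-weighted multi-annotator loss $\sum_m w_m \mathcal{L}_C(\hat{\mathbf{y}},\mathbf{y}_m)$. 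The batched version follows verbatim, since averaging over a batch is itself linear and commutes with the same rearrangement.
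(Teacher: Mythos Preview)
Your proof is correct and follows essentially the same approach as the paper: both exploit that BCE is affine in the label argument and that the softmax weights satisfy $\sum_m w_m = 1$, with the only real step being to distribute the label-independent constant $\log(1-\hat{y}_i)$ across the $M$ summands via that normalization. Your explicit affine decomposition $a_i y_i + b_i$ is a slightly cleaner packaging than the paper's add-and-subtract manipulation, and your side remarks (that the identity applies to the pre-binarization label $\bar{\mathbf{y}}$, and the gradient corollary) are accurate additions not spelled out in the paper.
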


Thus, the proposed attention-on-label mechanism can be transformed to a loss re-weighting module, where the weights are the same ones that  are generated from the back-propagation feedback using different label sets in the meta-training process. Indeed, it shares a similar insight with \citet{ren2018learning} while \citet{ren2018learning} re-weighted the losses based on the variation of input images. Similar to Theorem \ref{thm:main_thm}, we can also learn that re-weighting the losses is equivalent to the weighted sample of network predictions (maybe with different data samples as input). Accordingly, \textbf{the noise in data (on both images and labels) can be observed and corrected via studying the losses.} It also leads to a broader research topic of modeling the distribution of losses, e.g., recent work on learning with noisy label \citep{li2020dividemix} by modeling the losses with Gaussian mixture models. We believe our work probably could be explored further in that direction.

\begin{proof}[Proof of Theorem \ref{thm:main_thm}]
Recall that $\tilde{\mathbf{y}}=\sum_{m=1}^{M}w_m \mathbf{y}_m$ is the attention-weighted sampling of different label sets. 
$\mathbf{y}_m=[y_1^m,...,y_n^m,...,y_N^m\}, y_n^m\in\{0,1\}$ is one of the $M$ set labels.
\begin{align*}
    \mathcal{L}_{C} (\hat{\mathbf{y}},\tilde{\mathbf{y}})
    & = -\frac{1}{N} \sum_{i=1}^N [\tilde{y}_i\log(\hat{y}_i)+ (1-\tilde{y}_i)\log(1-\hat{y}_i)] \\
    & = -\frac{1}{N} \sum_{i=1}^N [\sum_{m=1}^{M}w_m y^m_i\log(\hat{y}_i)+ (1-\sum_{m=1}^{M}w_m y^m_i)\log(1-\hat{y}_i)] \\
    & = -\frac{1}{N} \sum_{i=1}^N \{\sum_{m=1}^{M}w_m [y^m_i\log(\hat{y}_i) - y^m_i\log(1-\hat{y}_i)]+ \log(1-\hat{y}_i)\} \\
    & = -\frac{1}{N} \sum_{i=1}^N \{\sum_{m=1}^{M}w_m [y^m_i\log(\hat{y}_i) + (1 - y^m_i)\log(1-\hat{y}_i) - \log(1-\hat{y}_i)] + \\ 
    & \quad\quad\log(1-\hat{y}_i)\} \\
    & = -\frac{1}{N} \sum_{i=1}^N \{\sum_{m=1}^{M}w_m [y^m_i\log(\hat{y}_i) + (1 - y^m_i)\log(1-\hat{y}_i)] + \\ 
    & \quad\quad (1-\sum_{m=1}^{M}w_m ) \log(1-\hat{y}_i)\}
\end{align*}
Since $w_m$ is computed using the $\mathrm{softmax}$ function and $\sum_{m=1}^{M}w_m=1$,
\begin{flalign*}
    \mathcal{L}_{C} (\hat{\mathbf{y}},\tilde{\mathbf{y}}) &
    =-\frac{1}{N} \sum_{i=1}^N \{\sum_{m=1}^{M}w_m [y^m_i\log(\hat{y}_i) + (1 - y^m_i)\log(1-\hat{y}_i)]\} \quad\quad\quad\quad\quad\\
    & = \sum_{m=1}^{M}w_m \{-\frac{1}{N} \sum_{i=1}^N [y^m_i\log(\hat{y}_i) + (1 - y^m_i)\log(1-\hat{y}_i)]\}  \\
    & = \sum_{m=1}^M w_m \mathcal{L}_{C} (\hat{\mathbf{y}}, \mathbf{y}_m)
\end{flalign*}
\end{proof}

\subsection{Additional Experiments on CIFAR-10}
\label{sec:extra-cifar}
\subsubsection{Details of Adding Noise to CIFAR-10 Training Data}
We simulate 5 different types  of annotators (with different experience) in a similar manner as \citet{tanno2019learning} by injecting a noise into the ground-truth label of the training set in \textbf{CIFAR-10}. 

\textbf{Hammer-Spammer (HS):}   
For each class, the annotation is correct with probability $p\in[0,1]$ and otherwise chooses
labels uniformly as defined in \citet{khetan2017learning} . 

\textbf{Structured-Flips (SF):}
Similar to the HS annotator, SF is correct with a probability $p$ and otherwise we flip the label of each class to another label (as easily confused ones), which is chosen as pre-defined pairs for each class. The pairs are defined as Airplane vs Bird, cat vs dog, dear vs cat, horse vs deer, ship vs airplane, and truck vs automobile.

\textbf{Ordered-Confusion (OC):}
This annotator is likely to confuse the target class with “neighbouring” classes as defined in the corresponding confusion matrix.

\textbf{Adversarial (AD):}
AD is an adversarial annotator, who has an high accuracy of grouping images from the same category while constantly giving the wrong labels. 

\textbf{Average (AVG):}
The AVG annotation is composed using a average of all 4 confusion matrices from previously defined 4 types of annotators.

The overall noise-level is defined as $1-p$. Each set of noisy labels are generated based on the defined confusion matrices for each type (as samples shown in Fig.\ref{fig:noise_cm}, where the noise-levels for HS, SF, OC, AD are 30\%, 40\%, 50\%, 100\% individually). Whether each data entry would have a noisy label is randomly selected while the overall noisy distribution should correspond to each confusion matrix, individually. 

\begin{figure}[t]
    \centering
	\includegraphics[width=1\columnwidth]{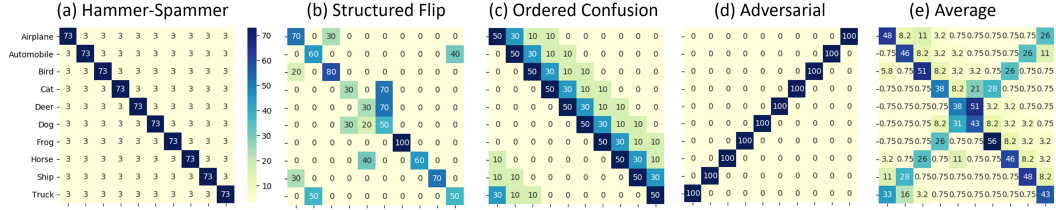}
	\caption{Confusion matrices of noisy labels from 5 different types of simulated annotators.
	}
	\label{fig:noise_cm}
\end{figure}


\subsubsection{How Does Different Noise Level of Labels Affect the Performance?} 
For this experiment, we want to see how the noise-level will affect the model training and the performance of our proposed method and \textbf{CM}. We include 4 sets of labels, i.e., HS, SF, OC and AVG. For each time, all 4 sets share the same noise-levels (ranged from 10\% to 80\%). We excuse the AD label set from this experiment since its noise level can not be adjusted. We compare the prediction accuracy (in multi-class classification tasks) of baseline models (\textbf{R50}) that are trained using individual label sets and our proposed method. The performance of \textbf{CM} is also evaluated. Both \textbf{CM} and \textbf{Ours} are using all 4 label sets for the training. The results are shown in Figure \ref{fig:noise_level}. The proposed method can constantly achieve better or similar results as the models trained with the relatively more accurate labels (e.g., AVG and HS). Furthermore, our method outperforms \textbf{CM} in most of the noise-levels and has even bigger margins for noise-level 40\% and 50\%. Additionally, we can also see that SF noise harms the performance the most since similar objects with erroneous labels can confuse the model more. 

\begin{figure}[t]
    \centering
	\includegraphics[width=1\columnwidth]{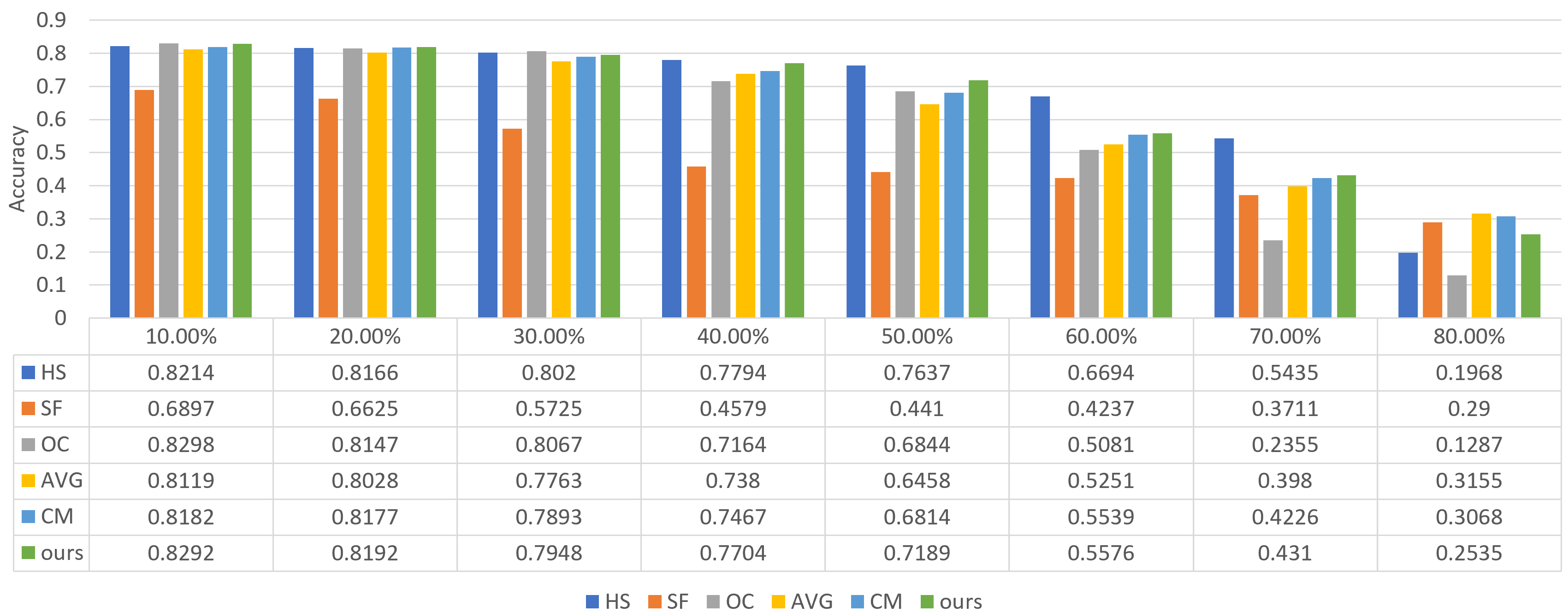}
	\caption{Classification accuracy using different label sets and methods over a range of noise-levels.
	}
	\label{fig:noise_level}
\end{figure}

\subsubsection{How Does Different Number of Available Annotation Sets Affect the Performance?}
In this experiment, we try to vary the number of available label sets (from 2, 3, 4 and 5 different types of annotators) used for training the proposed model (\textbf{Ours}) and \textbf{CM} at different noise level (at 10\%, 30\%, and 50\%). We start with training model using 2 quite different label sets, i.e., HS and AD. Then we add OC, SF, and AVG one at a time to see how the increasing number of the label sets could affect the final classification performance. HS represents a relatively experienced annotators and AD can be seen as a totally 'bad' annotator. We start with these two sets of labels. Then, label sets from other types of annotators are added. As shown in the Figure \ref{fig:annotators},  a surge of the accuracy can be observed after including more than 2 annotators. Considering that the adversarial annotator (with noise-level 100\%) is among the initial two, both \textbf{CM} and \textbf{Ours} can learn better immediately after a third one (as a confirmation) jumps in. We can also observe that our method performs much better when the noise level is relatively high (50\% in this case), while the CNN model itself may have already been capable of learning well from the labels with low noise levels. 

\begin{figure}[t]
    \centering
	\includegraphics[width=0.9\columnwidth]{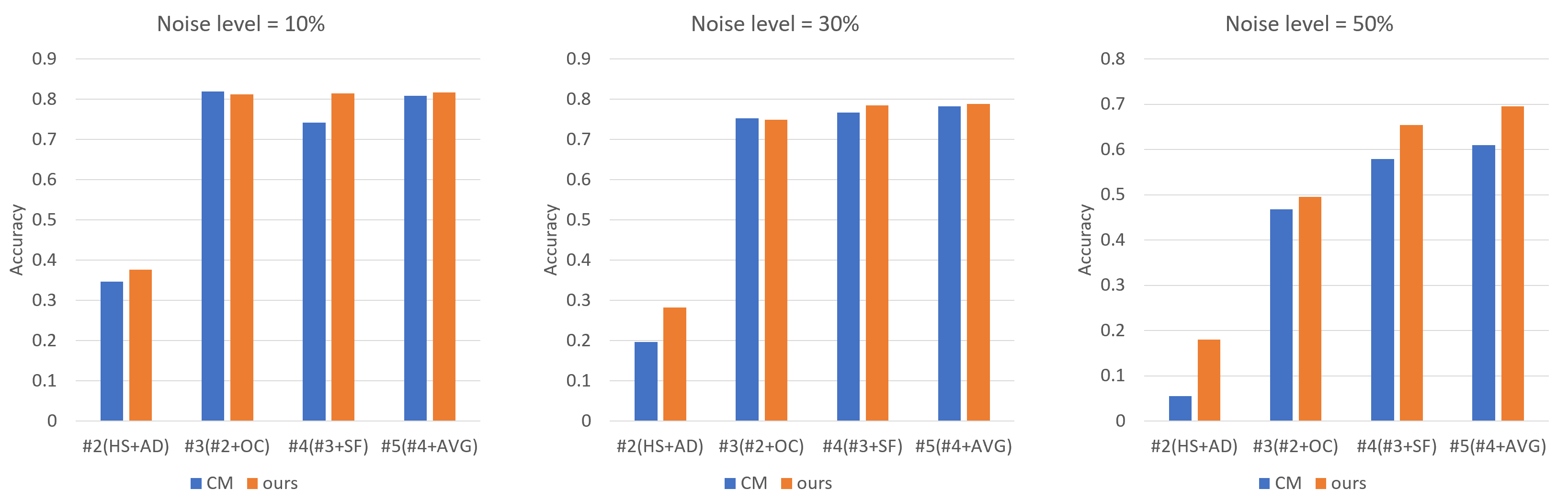}
	\caption{Classification accuracy using different number of label sets at noise-level 10\%, 30\%, and 50\%.
	}
	\label{fig:annotators}
\end{figure}

\subsection{Baseline Classification Results with Different Annotations} As shown in Table \ref{tab:uncertainty}, we observe a large difference in uncertainty among various algorithm generated label sets. Therefore, we first want to see how these different label sets will affect the model training. In Table \ref{tab:label-AUC}, we illustrate the averaged AUCs for four different label sets. $negbio\_u0$ is generated by setting all the uncertain cases to $0$ (as negative cases) while $negbio\_u1$ is produced by giving all the uncertain cases to $1$ (as positive cases). Similar process is applied to the $chexpert$ label sets as well. As shown in Table \ref{tab:label-AUC}, the testing performance of all four models are relatively on the same level for both \textbf{MIMIC-CXR} and \textbf{OpenI}. It indicates that the CNN based model is not so sensitive to the change of labels and can overcome the noise in the label set to a certain degree, but we should note that it does not necessarily improve the overall performance of the trained model. We believe a large amount of data with higher quality labels will benefit the training and make the trained model more accurate and robust.

\begin{table}[t]
	\small
	\centering
	\begin{tabular}{ l | c | c | c | c }
		\hline
		\textbf{Baseline Model} & negbio\_u1 & negbio\_u0 & chexpert\_u1 & chexpert\_u0 \\
		\hline\hline
		AVG AUC (MIMIC-CXR )    & 0.825      & 0.821      & 0.824        & 0.810        \\
		AVG AUC (OpenI)         & 0.751      & 0.756      & 0.755        & 0.752        \\
		\hline          
	\end{tabular}
	\caption{Averaged AUCs with the baseline R50 from four label sets.}
	\label{tab:label-AUC}
\end{table}

\subsection{Additional Experiments on Chest X-ray Images and Textual Reports}
In addition to the classification results using image only shown in the main text, we show the complete results for classifying the image and text together on the testing set of \textbf{MIMIC-CXR} (with NLP mined image labels) and \textbf{OpenI} (with hand-labeled labels). We observe similar results on the image-text classification task in comparison to image-only ones. The data from \textbf{OpenI} dataset are from a  different institute to the one of \textbf{MIMIC-CXR} data (our training data). Therefore, the domain gap shall be considered when we examine the performance. Table \ref{tab:auc-image-text-full} additionally shows the AUCs for each disease findings. Indeed, the text report contains more information about the disease diagnosis (maybe more than the image itself). We also observe increase of the overall AUCs. In this case, our proposed meta-training with attention-on-label scheme also helps to boost the classification performance with a significant amount. As pointed out before, TieNet achieves better classification results in some of the categories since they adopted a more complicated text embedding network (hard to implement with no open code released) and we believe better results could be obtained if our learning process was applied on the same model.

\begin{table}[t]
	\centering
	\resizebox{0.8\textwidth}{!}{
	\begin{tabular}{|l|c|c|c||c|c|c|c|c|}
	    \hline
		& \multicolumn{3}{c||}{MIMIC-CXR Test (NLP)} & \multicolumn{5}{c|}{OpenI (hand-labeled)}                               \\ \hline
		Disease & R50 & CM & Ours & NG & Tie & Rimages/50 & CM & Ours \\ \hline
		\multicolumn{9}{|c|}{Image only} \\
		\hline
		Atelectasis         & 0.821         & 0.832               & 0.826     & 0.833       & 0.774        & 0.781    & 0.81            & 0.826         \\ 
		Cardio.        & 0.825         & 0.852               & 0.879     & 0.913       & 0.847        & 0.859    & 0.881           & 0.879         \\ 
		Consol.       & 0.762         & 0.751               & 0.906     & -           & -            & 0.829    & 0.842           & 0.906         \\ 
		Edema               & 0.887         & 0.903               & 0.885     & 0.931       & 0.879        & 0.895    & 0.924           & 0.885         \\ 
		E-cardio     & 0.74          & 0.757               & 0.725     & -           & -            & 0.795    & 0.758           & 0.725         \\ 
		Fracture            & 0.722         & 0.771               & 0.632     & 0.671       & -            & 0.513    & 0.596           & 0.632         \\ 
		Lesion         & 0.765         & 0.744               & 0.643     & 0.643       & 0.658        & 0.585    & 0.58            & 0.643         \\ 
		Opacity        & 0.814         & 0.82                & 0.775     & 0.803       & -            & 0.742    & 0.738           & 0.775         \\ 
		No-finding          & 0.857         & 0.863               & 0.775     & -           & 0.747        & 0.754    & 0.739           & 0.775         \\ 
		Effusion    & 0.906         & 0.914               & 0.942     & 0.942       & 0.899        & 0.912    & 0.932           & 0.942         \\ 
		Pleural-o.       & 0.866         & 0.829               & 0.705     & -           & -            & 0.648    & 0.676           & 0.705         \\ 
		Pneumonia           & 0.809         & 0.809               & 0.871     & 0.863       & 0.731        & 0.781    & 0.823           & 0.871         \\ 
		Pneum-x        & 0.866         & 0.858               & 0.833     & 0.843       & 0.709        & 0.793    & 0.882           & 0.833         \\ 
		Devices     & 0.92          & 0.926               & 0.729     & 0.805       & -            & 0.628    & 0.655           & 0.729         \\ \hline
		Average             & 0.825         & 0.830               & 0.794     & -           & -            & {0.751}    & \underline{0.774}           & \textbf{0.794}         \\ \hline
		\multicolumn{9}{|c|}{Image \& Text} \\
        \hline
		Atelectasis         & 0.985         & 0.986               & 0.981     & - & 0.976        & 0.901    & 0.909           & 0.925         \\ 
		Cardio.        & 0.946         & 0.953               & 0.949     & - & 0.962        & 0.915    & 0.928           & 0.949         \\ 
		Consol.       & 0.911         & 0.913               & 0.904     & - & -            & 0.914    & 0.891           & 0.907         \\ 
		Edema               & 0.955         & 0.952               & 0.956     & - & 0.995        & 0.903    & 0.915           & 0.939         \\ 
		E-cardio     & 0.923         & 0.916               & 0.936     & - & -            & 0.581    & 0.714     images/      & 0.598         \\ 
		Fracture            & 0.935         & 0.766               & 0.876     & - & -            & 0.705    & 0.683           & 0.739         \\
		Lesion         & 0.865         & 0.885               & 0.814     & - & 0.96         & 0.615    & 0.607           & 0.649         \\ 
		Opacity        & 0.969         & 0.968               & 0.967     & - & -            & 0.849    & 0.854           & 0.877         \\ 
		No-finding          & 0.975         & 0.972               & 0.968     & - & 0.936        & 0.79     & 0.82            & 0.867         \\ 
		Effusion    & 0.974         & 0.973               & 0.974     & - & 0.977        & 0.944    & 0.948           & 0.943         \\
		Pleural-o.       & 0.92          & 0.877               & 0.892     & - & -            & 0.723    & 0.778           & 0.739         \\ 
		Pneumonia           & 0.927         & 0.931               & 0.933     & - & 0.994        & 0.812    & 0.834           & 0.889         \\ 
		Pneum-x        & 0.929         & 0.919               & 0.926     & - & 0.96         & 0.879    & 0.879           & 0.853         \\ 
		Devices     & 0.971         & 0.969               & 0.97      & - & -            & 0.796    & 0.787           & 0.821         \\ \hline
		Average             & 0.941         & 0.927               & 0.931     & - & -            & 0.809    & \underline{0.824}           & \textbf{0.835}         \\ \hline
	\end{tabular}}
	\caption{Classification AUCs for 14 findings in both Chest X-ray images and associated text reports using the testing set of \textbf{MIMIC-CXR} (with NLP generated labels) and \textbf{OpenI} dataset (hand-labeled GT). E-cardio: enlarged-cardiomediastinum; Pneum-x: pneumothorax}
	\label{tab:auc-image-text-full}
\end{table}

\subsection{Classification Results on MIMIC-CXR 1K Hand-labeled Test Data}
Following the same labeling protocol proposed in \citet{demner2012design}, we randomly selected 1000 images and associated textual reports from the testing set of \textbf{MIMIC-CXR} dataset. One of our staffs (trained by a board-certified radiologist) hand-labeled the image by assigning the 14 labels manually to each image based on the associated reports. Identifying disease mentions in a report is a more accessible process comparing to examine the chest x-ray images to recognize the disease pattern in images, which will require years of professional training. The 14 labels adopted are identical to the NLP mined labels, i.e., Atelectasis, Cardiomegaly, Consolidation, Edema, Enlarged-cardiomediastinum, Fracture, Lung-lesion, Lung-opacity, No-finding, Pleural-effusion, Pleural-other, Pneumonia, Pneumothorax, and Support-devices.

As shown in Table \ref{tab:mimic-1kgt}, we observe the similar superior performance from our proposed method on the \textbf{MIMIC-CXR 1K hand-labeled Test} data. Note that the absolute accuracy is higher than the ones reported on \textbf{OpenI} dataset. It may indicate the domain gap between \textbf{MIMIC-CXR} and \textbf{OpenI} dataset. 

Additionally, we evaluate the accuracy of NLP mined label sets, e.g., negbio\_u1 and negbio\_u0. negbio\_u1 and negbio\_u0 are two sets of image labels derived from the original NLP mined labels with uncertainty (labeled as -1). negbio\_u1 is produced by setting uncertainty to 1 and we set -1 to 0 for negbio\_u0. Two sets of NLP labels have quite different accuracy (~6\% gap) on the hand-labeled set. Similar findings can be observed between chexpert\_u1 and chexpert\_u0. It may be because the manual image labeling leans to the positive for those uncertain cases when our annotators will give 1 to those cases that the radiologist reported the observation of symptoms on the image (e.g., a density) while he/she is not sure what disease it is (e.g., it could be pneumonia, effusion, or consolidation). This aligns with the protocol used in \citet{demner2012design}. 
Our proposed method actually achieve a higher or similar accuracy as the NLP annotators while a model trained with NLP mined labels usually can not reach the same level of accuracy.   

\begin{table}[t]
\centering
\resizebox{\textwidth}{!}{
\begin{tabular}{|l|c|c|c||c|c|c|c|c|c|c|}
\hline
\textbf{AUC} & \multicolumn{3}{c||}{MIMIC-CXR Test (NLP-label)} & \multicolumn{7}{c|}{MIMIC-CXR 1k Test (hand-labeled GT)} \\
\hline
Disease & R50 & CM & Ours & R50 & negbio\_u1 & negbio\_u0 & chexpert\_u1 & chexpert\_u0 & CM & Ours \\
\hline
Atelectasis                 & 0.985          & 0.986         & 0.981         & 0.962 & 0.914            & 0.832     &0.908	&0.829  & 0.958 & 0.96  \\
Cardio.                     & 0.946          & 0.953         & 0.949         & 0.865 & 0.821            & 0.805     &0.822	&0.814  & 0.871 & 0.862 \\
Consol.                     & 0.911          & 0.913         & 0.904         & 0.802 & 0.892            & 0.772     &0.875	&0.777  & 0.796 & 0.861 \\
Edema                       & 0.955          & 0.952         & 0.956         & 0.877 & 0.955            & 0.901     &0.948	&0.903  & 0.880  & 0.912 \\
E-cardio                    & 0.923          & 0.916         & 0.936         & 0.744 & 0.847            & 0.759     &0.847	&0.759  & 0.745 & 0.769 \\
Fracture                    & 0.935          & 0.766         & 0.876         & 0.671 & 0.733            & 0.689     &0.747	&0.718  & 0.746 & 0.769 \\
Lesion                      & 0.865          & 0.885         & 0.814         & 0.76  & 0.777            & 0.727     &0.777	&0.719  & 0.739 & 0.759 \\
Opacity                     & 0.969          & 0.968         & 0.967         & 0.877 & 0.877            & 0.861     &0.871	&0.863  & 0.890  & 0.880  \\
no-finding                  & 0.975          & 0.972         & 0.968         & 0.909 & 0.845            & 0.845     &0.815	&0.815  & 0.906 & 0.909 \\
Effusion                    & 0.974          & 0.973         & 0.974         & 0.923 & 0.941            & 0.906     &0.940	&0.913  & 0.925 & 0.931 \\
pleural-o.                  & 0.92           & 0.877         & 0.892         & 0.803 & 0.906            & 0.825     &0.906	&0.825  & 0.749 & 0.844 \\
pneumonia                   & 0.927          & 0.931         & 0.933         & 0.872 & 0.957            & 0.685     &0.955	&0.694  & 0.878 & 0.902 \\
pneum-x                     & 0.929          & 0.919         & 0.926         & 0.852 & 0.874            & 0.831     &0.917	&0.860  & 0.788 & 0.889 \\
Devices                     & 0.971          & 0.969         & 0.97          & 0.872 & 0.838            & 0.837     &0.836	&0.837  & 0.860  & 0.885 \\
\hline
Average                     & 0.941          & 0.927         & 0.931         & 0.842 & \textbf{0.869}   & 0.805 &\textbf{0.868} &0.809 & 0.837 & \textbf{0.868}  \\
\hline
\end{tabular}}
\caption{Classification AUCs for 14 findings in Chest X-Ray image and text report using the testing set of \textbf{MIMIC-CXR} (with NLP generated labels) and \textbf{MIMIC-CXR 1K hand-labeled Test} set. negbio\_u1, negbio\_u0, chexpert\_u1, and chexpert\_u0 are four sets of image labels from the original NLP mined labels with uncertainty. negbio\_u1 is produced by setting uncertainty -1 to 1 and we set -1 to 0 for negbio\_u0. Similiar settings for chexpert\_u1 and chexpert\_u0.}
\label{tab:mimic-1kgt}
\end{table}

\newpage
\subsection{Pseudo-code of the Main Training Function}
\begin{lstlisting}[language=Python, caption=Train Function]
def train(cls_model, bert_model, attn-on-label, train_loader, args):
    
    for batch_idx, (images, labels_negbio_u1, labels_negbio_u0,
                    labels_chexpert_u1, labels_chexpert_u0,
                    reports) in enumerate(tqdm(train_loader)):
        cls_model.train()
        bert_model.eval()
        attn-on-label.train()
        optimizer.zero_grad()
        
        # computing model prediction
        text_embedding = bert_model(reports)
        preds = cls_model(images, text_feat)

        label_list = [labels_negbio_u1, labels_negbio_u0,
                      labels_chexpert_u1, labels_chexpert_u0]
        feat_all = []
        
        # meta-training for each set of label
        for targets_m in label_list:
            meta_loss = binary_cross_entropy_loss(preds, targets_m)

            # Meta-training for classification model
            grads = get_grad(meta_loss, cls_model.parameters())
            cls_weights = cls_model.parameters() - args.meta_lr * grads
            
            # Computing Image Features using Meta-trained model
            feat_tmp = cls_model(inputs, text_feat, weights=cls_weights,                 get_feat=True)
            feat_all.append(feat_tmp.detach())
            
        # attention-on-label module
        feat_all = stack(feat_all)
        attn_loss = softmax(attn-on-label.attn_fc(feat_all))
        # compute new label
        new_labels = mean(attn_loss * stack(label_list))
        # differentiable binarization
        new_labels = 1.0 / (1.0 + exp((new_labels - 0.1) * -50.0))
        
        class_loss = binary_cross_entropy_loss(preds, new_labels)

        # updating the attn-on-label parameters
        grads = get_grad(class_loss, attn-on-label.parameters())
        attn_weights = attn-on-label.parameter() - args.attn_lr * grad 
        attn-on-label.attn_fc.weight.copy(attn_weights['attn_fc.weight'])
        attn-on-label.attn_fc.bias.copy(attn_weights['attn_fc.bias'])
        
        # update the classification model
        class_loss.backward()
        optimizer.step()  
        
\end{lstlisting}

\end{document}